\DeclareRobustCommand\onedot{\futurelet\@let@token\@onedot}
\def\@onedot{\ifx\@let@token.\else.\null\fi\xspace}
\def\eg{\emph{e.g}\onedot} 
\def\ie{\emph{i.e}\onedot}
\def\etal{\emph{et al}\onedot}
\def\Vec#1{{\boldsymbol{#1}}}
\def\Mat#1{{\boldsymbol{#1}}}
\def\SPD#1{\mathcal{S}_{++}^{#1}}
\newcommand{\tr}{\mathop{\rm  Tr}\nolimits}
\newcommand{\DIAG}{\mbox{Diag\@\xspace}}
\begin{document}
\pagestyle{headings}
\mainmatter
\title{From Manifold to Manifold: Geometry-Aware Dimensionality Reduction for SPD Matrices} 

\titlerunning{Geometry-Aware Dimensionality Reduction for SPD Matrices}

\authorrunning{Harandi \etal}

\author
  {
  Mehrtash~T.~Harandi
  \and
  Mathieu~Salzmann
  \and
  Richard~Hartley
  }

\institute
  {
  Australian National University, Canberra, ACT 0200, Australia\\
  ~NICTA\thanks
    {
    NICTA is funded by the Australian Government as represented by the Department of
    Broadband, Communications and the Digital Economy, as well as by the Australian
    Research Council through the ICT Centre of Excellence program.
    }%
    , Locked Bag 8001, Canberra, ACT 2601, Australia%
  }

\maketitle

\begin{abstract}
Representing images and videos with Symmetric Positive Definite (SPD) matrices and considering the Riemannian geometry of the resulting space has proven beneficial for many recognition tasks. Unfortunately, computation on the Riemannian manifold of SPD matrices --especially of high-dimensional ones-- comes at a high cost that limits the applicability of existing techniques.  
In this paper we introduce an approach that lets us handle high-dimensional SPD matrices by constructing a lower-dimensional, more discriminative SPD manifold. To this end, we model the mapping from the high-dimensional SPD manifold to the low-dimensional one
with an orthonormal projection. In particular, we search for a projection that yields a low-dimensional manifold with maximum discriminative power encoded via an affinity-weighted similarity measure based on metrics on the manifold. Learning can then be expressed as an optimization problem on a Grassmann manifold. Our evaluation on several classification tasks shows that our approach 
leads to a significant accuracy gain over state-of-the-art methods.
\keywords{Riemannian geometry, SPD manifold, Grassmann manifold, dimensionality reduction, visual recognition}
\end{abstract}

\section{Introduction}
\label{sec:introduction}

This paper introduces an approach to embedding the Riemannian structure of Symmetric Positive Definite (SPD) matrices into a lower-dimensional, more discriminative Riemannian manifold.
SPD matrices  are becoming increasingly pervasive in various domains. For instance, diffusion tensors naturally arise in medical imaging~\cite{Pennec_IJCV_2006}. In computer vision, SPD matrices have been shown to provide powerful representations for images and videos via region covariances~\cite{Tuzel_ECCV_2006}. Such representations have been successfully employed to categorize textures~\cite{Tuzel_ECCV_2006,Harandi_ECCV_2012}, pedestrians~\cite{Tuzel_PAMI_2008}, faces~\cite{Pang_TCSVT_2008,Harandi_ECCV_2012}, actions and gestures~\cite{Sanin_WACV_2013}.

\def \CONC_SCALE {0.7}
\begin{figure}[!tb]	
	\centering	
	\includegraphics[width = \CONC_SCALE \textwidth]{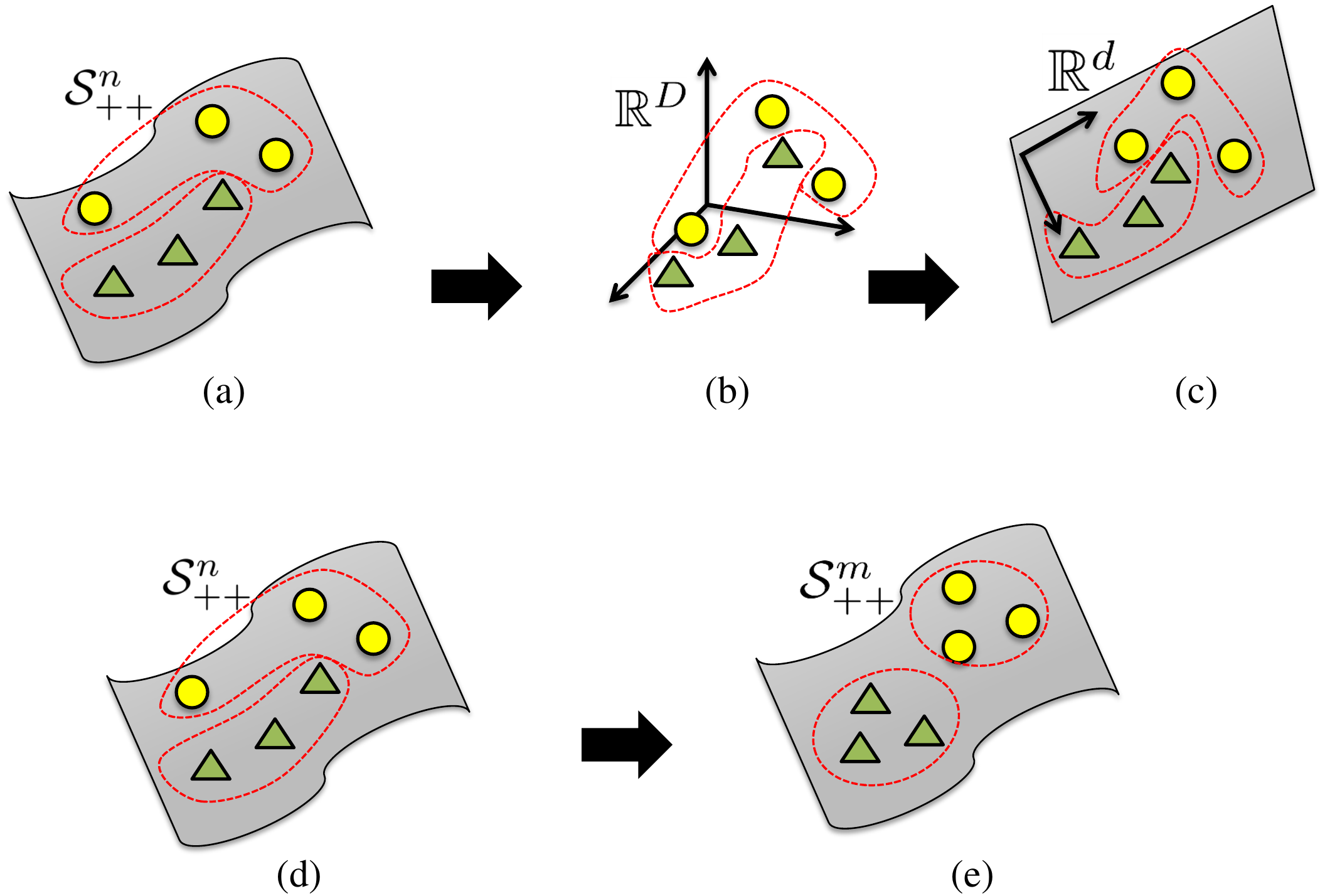}
	\caption{Conceptual comparison of typical dimensionality reduction methods on the manifold~\cite{Fletcher_2004_PGA,Wang_CVPR_2012_CDL} 
	and our approach.
	\textbf{Top row (existing techniques):} The original manifold (a) is first flattened either via tangent space computation or by Hilbert space embedding. 
	The flattened manifold (b) is then mapped to a lower-dimensional, optionally more discriminative space (c). The distortion incurred by the initial flattening may typically make this mapping more complicated.
	\textbf{Bottom row (our approach):} The original manifold (d) is directly transformed to a lower-dimensional, more discriminative manifold (e).
	}
	\label{fig:conceptual_diagram}
\end{figure}

SPD matrices can be thought of as an extension of positive numbers and form the interior of the positive semidefinite cone. It is possible to directly employ the Frobenius norm as a similarity measure between SPD matrices, hence analyzing problems involving such matrices via Euclidean geometry. However, as several studies have shown, undesirable phenomena may occur when Euclidean geometry is utilized to manipulate SPD matrices~\cite{Pennec_IJCV_2006,Tuzel_PAMI_2008,Sadeep_CVPR_2013}.
One example of this is the \textit{swelling effect} that occurs in diffusion tensor imaging (DTI), where a matrix represents the covariance of the local Brownian motion of water molecules~\cite{Pennec_IJCV_2006}: When considering Euclidean geometry to interpolate between two diffusion tensors, the determinant of the intermediate matrices may become strictly larger than the determinants of both original matrices, which is a physically unacceptable behavior. In~\cite{Pennec_IJCV_2006}, a Riemannian structure for SPD matrices was introduced to overcome the drawbacks of the Euclidean representation. This Riemannian structure is induced by the Affine Invariant Riemmanian Metric (AIRM), and is referred to as the SPD or tensor manifold. 

As shown in several studies~\cite{Pennec_IJCV_2006,Tuzel_PAMI_2008,Harandi_ECCV_2012,Sadeep_CVPR_2013}, accounting for the geometry of SPD manifolds can have a highly beneficial impact. However, it also leads to challenges in developing effective and efficient inference methods. The main trends in analyzing SPD manifolds are to either locally flatten them via tangent space approximations~\cite{Tuzel_PAMI_2008,Sanin_WACV_2013}, or embed them in higher-dimensional Euclidean spaces~\cite{Harandi_ECCV_2012,Caseiro_ECCV_2012,Sadeep_CVPR_2013}. In both cases, the computational cost of the resulting methods increases dramatically with the dimension of the SPD matrices. As a consequence, very low-dimensional SPD matrices are typically employed (\eg, region covariance descriptors obtained from a few low-dimensional features), with the exception of a few studies where medium-size matrices were used~\cite{Pang_TCSVT_2008,Harandi_ECCV_2012}. While the matrices obtained from low-dimensional features have proven sufficient for specific problems, they are bound to be less powerful and discriminative than the high-dimensional features typically used in computer vision.

To overcome this limitation, here, we introduce an approach that lets us handle high-dimensional SPD matrices. In particular, from a high-dimensional SPD manifold, we construct a lower-dimensional, more discriminative SPD manifold. While some manifold-based dimensionality reduction techniques have been proposed~\cite{Fletcher_2004_PGA,Wang_CVPR_2012_CDL}, as illustrated in Fig.~\ref{fig:conceptual_diagram}, they typically yield a Euclidean representation of the data and rely on flattening the manifold, which incurs distortions. In contrast, our approach directly works on the original manifold and exploits its geometry to learn a representation that {\bf (i)} still benefits from useful properties of SPD manifolds, and {\bf (ii)} can be used in conjunction with existing manifold-based recognition techniques to make them more practical and effective.

More specifically, given training SPD matrices, we search for a projection from their high-dimensional SPD manifold to a low-dimensional one such that the resulting representation maximizes an affinity-weighted similarity between pairs of matrices. In particular, we exploit the class labels to define an affinity measure, and employ either the AIRM, or the Stein divergence~\cite{Sra_NIPS_2012} to encode the similarity between two SPD matrices. Due to the affine invariance property of the AIRM and of the Stein divergence, any full rank projection would yield an equivalent representation. This allows us, without loss of generality, to model the projection with an orthonormal matrix, and thus express learning as an unconstrained optimization problem on a Grassmann manifold, which can be effectively optimized using a conjugate gradient method on the manifold. 

We demonstrate the benefits of our approach on several tasks where the data can be represented with high-dimensional SPD matrices. In particular, our method outperforms state-of-the-art techniques on three classification tasks: image-based material categorization and face recognition, and action recognition from 3D motion capture sequences. A Matlab implementation of our algorithm is available from the first author's webpage.

\section{Related Work}

We now discuss in more details the three techniques that also tackle dimensionality reduction of manifold-valued data.

Principal Geodesic Analysis (PGA) was introduced in~\cite{Fletcher_2004_PGA}  as a generalization of Principal Component Analysis (PCA) to Riemannian manifolds. PGA identifies the tangent space whose corresponding subspace maximizes the variability of the data on the manifold. PGA, however, is equivalent to flattening the Riemannian manifold by taking its tangent space at the Karcher, or Fr\'{e}chet, mean of the data. As such, it does not fully exploit the structure of the manifold. Furthermore, PGA, as PCA, cannot exploit the availability of class labels, and may therefore be sub-optimal for classification.

In~\cite{Wang_CVPR_2012_CDL}, the Covariance Discriminative Learning (CDL) algorithm was proposed to embed the SPD manifold into a Euclidean space. In contrast to PGA, CDL utilizes class labels to learn a discriminative subspace using Partial Least Squares (PLS) or Linear Discriminant Analysis (LDA). However, CDL relies on mapping the SPD manifold to the space of symmetric matrices via the principal matrix logarithm.
While this embedding has some nice properties (\eg, diffeomorphism), it can also be thought of as embedding 
the SPD manifold into its tangent space at the identity matrix. Therefore, although supervised, CDL also exploits data potentially distorted by the use of a single tangent space, as PGA.

Finally, in~\cite{Goh_CVPR_2008}, several Nonlinear Dimensionality Reduction techniques were extended to their Riemannian counterparts. This was achieved by introducing various Riemannian geometry concepts, such as Karcher mean, tangent spaces and geodesics, in Locally Linear Embedding (LLE), Hessian LLE and Laplacian Eigenmaps. The resulting algorithms were applied to several unsupervised clustering tasks. Although these methods can, in principle, be employed for supervised classification, they are limited to the transductive setting since they do not define any parametric mapping to the low-dimensional space.

In this paper, we learn a mapping from a high-dimensional SPD manifold to a lower-dimensional one without relying on tangent space approximations of the manifold. Our approach therefore accounts for the structure of the manifold and can simultaneously exploit class label information. The resulting mapping lets us effectively handle high-dimensional SPD matrices for classification purposes. Furthermore, by mapping to another SPD manifold, our approach can serve as a pre-processing step to other Riemannian-based approaches, such as the manifold sparse coding of~\cite{Harandi_ECCV_2012}, thus making them practical to work with more realistic, high-dimensional features. Note that, while our formulation is inspired from graph embedding methods in Euclidean spaces, \eg,~\cite{Yan_PAMI_2007}, here we work with data lying on more challenging non-linear manifolds.

To the best of our knowledge, this  is the first work that shows how a high-dimensional SPD manifold can be 
transformed into another SPD manifold with lower intrinsic dimension. Note that a related idea, but with a very different approach, was introduced in~\cite{jung2012analysis} to decompose high-dimensional spheres 
into submanifolds of decreasing dimensionality.

\section{Riemannian Geometry of SPD Manifolds}
\label{sec:preliminaries}

In this section, we discuss some notions of geometry of SPD manifolds. Throughout this paper we will use the following notation:
$\SPD{n}$ is the space of real $n \times n$ SPD matrices; $\mathbf{I}_n \in \mathbb{R}^{n \times n}$ is the identity matrix; 
$\mathrm{GL}(n)$ is the general linear group, \ie, the group of real invertible $n \times n$ matrices.

\begin{definition} \label{def:SPD}
A real and symmetric matrix $\Mat{X} \in \mathbb{R}^{n \times n}$ is said to be SPD if $\Vec{v}^T\Mat{X}\Vec{v}$ 
is positive for any non-zero $\Vec{v} \in \mathbb{R}^n$. 
\end{definition}

The space of $n \times n$ SPD matrices
is obviously not a vector space since multiplying an SPD matrix by a negative scalar results in a matrix which does not belong to $\SPD{n}$.
Instead, $\SPD{n}$ forms the interior of a convex cone in the $n^2$-dimensional Euclidean space.
The $\SPD{n}$ space is mostly studied when endowed with a Riemannian metric and thus forms a Riemannian manifold~\cite{Pennec_IJCV_2006}. 
A natural way to measure closeness on a manifold is by considering the geodesic distance between two points on the manifold.
Such a distance is defined as the length of the shortest curve connecting the two points.
The shortest curves are known as geodesics and are analogous to straight lines in $\mathbb{R}^n$.
The Affine Invariant Riemannian Metric (AIRM) is probably the most popular Riemannian structure for analyzing SPD 
matrices~\cite{Pennec_IJCV_2006}. Let $\Mat{P}$ be a point on $\SPD{n}$. The AIRM for two tangent vectors 
$\Vec{v},\Vec{w} \in T_{\Mat{P}}{\SPD{n}}$ is defined as
\begin{equation}
	\langle \Vec{v}, \Vec{w} \rangle_\Mat{P} \coloneqq  
	\langle \Mat{P}^{-1/2}\Vec{v}\Mat{P}^{-1/2}, \Mat{P}^{-1/2}\Vec{w}\Mat{P}^{-1/2} \rangle
	= \tr \left( \Mat{P}^{-1} \Vec{v} \Mat{P}^{-1} \Vec{w}\right)\;.
	\label{eqn:AIRM_equ}
\end{equation}

\begin{definition} \label{def:AIRM_geodesic}
	The geodesic distance $\delta_g: \mathcal{S}_{++}^{n} \times \mathcal{S}_{++}^{n} \rightarrow [0,\infty)$ induced by the AIRM
	is defined as
		\begin{align}
    	\delta_g^2(\Mat{X},\Mat{Y}) &=   \|\log(\Mat{X}\Mat{Y}^{-1})\|_F^2\;, 
    	\label{eqn:AIRM_geo}
		\end{align}%
where $\log(\cdot)$ is the matrix principal logarithm. 
\end{definition}
More recently, Sra introduced the Stein metric on SPD manifolds~\cite{Sra_NIPS_2012}:

\begin{definition} \label{def:stein_divergence}
	The Stein metric $\delta_S: \mathcal{S}_{++}^{n} \times \mathcal{S}_{++}^{n} \rightarrow [0,\infty)$ is a 
	symmetric type of Bregman divergence and is defined as
		\begin{align}
    	\delta_S^2(\Mat{X},\Mat{Y}) &= \ln \det \bigg( \frac{\Mat{X}+\Mat{Y}}{2} \bigg) 
    	- \frac{1}{2}  \ln \det( \Mat{X}\Mat{Y} )  \;. 
    	\label{eqn:Stein_Div}
		\end{align}%
\end{definition}

The Stein metric shows several similarities to the geodesic induced by the AIRM while being less expensive to compute~\cite{Cherian:PAMI:2013}.
In addition to the properties studied by Sra~\cite{Sra_NIPS_2012}, we provide the following important theorem which relates the
length of curves under the two metrics.
\begin{theorem} \label{thm:curve_equiv_stein_thm}
The length of any given curve is the same under $\delta_g$ and	$\delta_s$ up to a scale of $2\sqrt{2}$.
\end{theorem}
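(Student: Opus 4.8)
The plan is to reduce this global statement about curve lengths to a local, infinitesimal comparison of the two distances. For a curve $\gamma:[0,1]\to\SPD{n}$, its length under a metric $\delta$ is the supremum over partitions of $\sum_i \delta(\gamma(t_i),\gamma(t_{i+1}))$, which is governed entirely by the behaviour of $\delta(\gamma(t),\gamma(t+\epsilon))$ as $\epsilon\to0$. It therefore suffices to show that at every base point $\Mat{X}\in\SPD{n}$ and along every tangent direction $\Vec{v}$ (a symmetric matrix) the two \emph{squared} distances satisfy $\delta_g^2(\Mat{X},\Mat{X}+\epsilon\Vec{v}) = 8\,\delta_S^2(\Mat{X},\Mat{X}+\epsilon\Vec{v}) + O(\epsilon^3)$. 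Since $8=(2\sqrt{2})^2$, taking square roots and integrating over $\gamma$ then transfers this pointwise ratio to the lengths, yielding exactly the factor $2\sqrt{2}$.

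For the AIRM side this expansion is immediate: $\delta_g$ is by definition the geodesic distance of the metric \eqref{eqn:AIRM_equ}, so the standard relation between a Riemannian distance and its metric tensor gives $\delta_g^2(\Mat{X},\Mat{X}+\epsilon\Vec{v}) = \epsilon^2\,\tr(\Mat{X}^{-1}\Vec{v}\Mat{X}^{-1}\Vec{v}) + O(\epsilon^3)$. The real work is the Stein side. Writing $\Mat{Y}=\Mat{X}+\epsilon\Vec{v}$ in \eqref{eqn:Stein_Div} and expanding each log-determinant through $\ln\det(\Mat{X}+\Mat{E}) = \ln\det\Mat{X} + \tr(\Mat{X}^{-1}\Mat{E}) - \tfrac{1}{2}\tr(\Mat{X}^{-1}\Mat{E}\Mat{X}^{-1}\Mat{E}) + O(\|\Mat{E}\|^3)$ --- with $\Mat{E}=\tfrac{\epsilon}{2}\Vec{v}$ for the $\ln\det\!\big(\tfrac{\Mat{X}+\Mat{Y}}{2}\big)$ term and $\Mat{E}=\epsilon\Vec{v}$ for the $\tfrac12\ln\det\Mat{Y}$ term --- I would check that the constant and the linear contributions cancel, leaving $\delta_S^2(\Mat{X},\Mat{X}+\epsilon\Vec{v}) = \tfrac{\epsilon^2}{8}\tr(\Mat{X}^{-1}\Vec{v}\Mat{X}^{-1}\Vec{v}) + O(\epsilon^3)$. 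The cancellation of the linear term is forced by the fact that $\delta_S^2\ge0$ attains its minimum $0$ on the diagonal $\Mat{X}=\Mat{Y}$, and the lone factor $\tfrac12$ inside the first determinant is precisely what produces the coefficient $\tfrac18$.

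Comparing the two leading coefficients gives the ratio $8$ --- and, crucially, this constant is the \emph{same} for every $\Mat{X}$ and every $\Vec{v}$. This base-point- and direction-independence is what makes the argument go through: it says that $\delta_S$ induces exactly the rescaled Riemannian metric $\tfrac{1}{8}\langle\cdot,\cdot\rangle_{\Mat{X}}$, so the Stein length element is uniformly $1/(2\sqrt{2})$ times the AIRM length element. Integrating along $\gamma$ then gives $L_{\delta_S}(\gamma)=\tfrac{1}{2\sqrt{2}}\,L_{\delta_g}(\gamma)$ for \emph{every} curve, which is the assertion.

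The step I expect to be the main obstacle is not the determinant algebra but making the passage from a divergence to a genuine length rigorous. Note that $\delta_S^2$ is a \emph{divergence}, not a Riemannian metric, so one must form the partition sums with $\delta_S$ (its square root, which Sra~\cite{Sra_NIPS_2012} established is a true metric); the factor then emerges from $\delta_S=\sqrt{\delta_S^2}\approx\tfrac{\epsilon}{2\sqrt2}\,\sqrt{\tr(\Mat{X}^{-1}\Vec{v}\Mat{X}^{-1}\Vec{v})}$ rather than from $\delta_S^2$ directly. Converting the pointwise second-order estimate into an identity of lengths also requires uniform control of the $O(\epsilon^3)$ remainder over the compact image $\gamma([0,1])$, so that the Riemann sums for both metrics converge to the corresponding integrals of a common integrand differing only by the constant $2\sqrt{2}$; this is where I would spend the most care.
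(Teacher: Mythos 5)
Your proposal is correct, and it rests on exactly the same key fact as the paper's proof --- the local second-order ratio $\delta_g^2/\delta_S^2 \to 8$, hence $\delta_g/\delta_S \to 2\sqrt{2}$ --- but you execute both halves of the argument differently. For the local computation, the paper first invokes affine invariance to reduce to a pair $(\Mat{I}_d, \DIAG(\exp(t\Vec{\nu})))$ and then evaluates the scalar limit as $t \to 0$ with two applications of L'H\^{o}pital's rule; you instead do a coordinate-free Taylor expansion of $\ln\det$ at an arbitrary base point, which checks out ($\delta_S^2(\Mat{X},\Mat{X}+\epsilon\Vec{v}) = \tfrac{\epsilon^2}{8}\tr(\Mat{X}^{-1}\Vec{v}\Mat{X}^{-1}\Vec{v}) + O(\epsilon^3)$ with the linear terms cancelling, matching the paper's $8$), and it has the merit of explicitly exhibiting $\delta_S$ as inducing the rescaled tensor $\tfrac{1}{8}\langle\cdot,\cdot\rangle_{\Mat{X}}$ everywhere without diagonalization. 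For the local-to-global transfer, the paper outsources the step to two theorems of~\cite{Hartley_IJCV_13}: a uniform limit $d_2/d_1 \to 1$ forces the intrinsic metrics to coincide, and intrinsic metrics equal up to a scale force all curve lengths to agree up to that scale. Here the paper's affine-invariance reduction is doing more than cosmetic work: it shows the ratio $\delta_g/\delta_S$ depends only on the eigenvalues of $\Mat{X}^{-1}\Mat{Y}$, which is precisely what delivers the \emph{uniformity} in $(\Mat{X},\Mat{Y})$ that the cited theorem demands, rather than the weaker curve-local uniformity you obtain from compactness of $\gamma([0,1])$. Relatedly, your final step --- partition sums converging to an integral of a common length element --- tacitly assumes a (piecewise) $C^1$ curve, whereas the theorem as stated, and the paper's Definition of length as a supremum over partitions, covers merely continuous curves; to close that last gap one needs the refinement/monotonicity argument for partition sums that the intrinsic-metric machinery of~\cite{Hartley_IJCV_13} packages, which you correctly identified as the delicate point even if you did not supply it.
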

\begin{proof}
	Given in appendix~\ref{sec:proof_int_metric}. \qed
\end{proof}

One of the motivations for projecting a higher-dimensional SPD manifold to a 	
lower-dimensional one is to preserve the properties of  $\delta_g^2$ and $\delta_S^2$~\cite{Pennec_IJCV_2006,Sra_NIPS_2012}. 
One important such property, especially in computer vision, is affine invariance~\cite{Pennec_IJCV_2006}.

\paragraph{{\bf Property 1}}(Affine invariance). For any $\Mat{M} \in \mathrm{GL}(n)$,
	\begin{align*}			
		\delta_g^2(\Mat{X},\Mat{Y}) &= \delta_g^2(\Mat{M}\Mat{X}\Mat{M}^T,\Mat{M}\Mat{Y}\Mat{M}^T), \\
		\delta_S^2(\Mat{X},\Mat{Y}) &= \delta_S^2(\Mat{M}\Mat{X}\Mat{M}^T,\Mat{M}\Mat{Y}\Mat{M}^T).
	\end{align*}

This property postulates that the metric between two SPD matrices is unaffected by the action of the affine group. In the specific case where the SPD matrices are region covariance descriptors~\cite{Tuzel_ECCV_2006}, this implies that the distance between two descriptors will remain unchanged after an affine transformation of the image features, such as a change of illumination when using RGB values. Note that, in addition to this specific implication, we will also exploit the affine invariance property for a different purpose when deriving our learning algorithm in the next section.

\section{Geometry-Aware Dimensionality Reduction}
\label{sec:tensor_learning}
	
We now describe our approach to learning an embedding of high-dimensional SPD matrices to a more discriminative, low-dimensional SPD manifold. More specifically, given a matrix $\Mat{X} \in \SPD{n}$, we seek to learn the parameters $\Mat{W} \in \mathbb{R}^{n \times m}$, $m<n$, of a generic mapping $f: \SPD{n} \times \mathbb{R}^{n \times m} \rightarrow \SPD{m}$, which we define as 
\begin{equation}
	f(\Mat{X},\Mat{W}) = \Mat{W}^T\Mat{X}\Mat{W}.
	\label{eqn:generic_mapping}
\end{equation}
Clearly, if $\SPD{n} \ni \Mat{X} \succ 0$ and  $\Mat{W}$ has full rank, $\SPD{m} \ni \Mat{W}^T\Mat{X}\Mat{W} \succ 0$.
	
Given a set of SPD matrices $\mathcal{X} = \left\{\Mat{X}_1,\cdots, \Mat{X}_p \right\}$, where each matrix $\Mat{X}_i \in \SPD{n}$, our goal is to find a transformation $\Mat{W}$ such that the resulting low-dimensional SPD manifold preserves some interesting structure of the original data. Here, we encode this structure via an undirected graph defined by a real symmetric affinity matrix $\Mat{A} \in \mathbb{R}^{p \times p}$. The element $\Mat{A}_{ij}$ of this matrix measures some notion of affinity between matrices $\Mat{X}_i$ and $\Mat{X}_j$, and may be negative. We will discuss the affinity matrix in more details in Section~\ref{sec:affinity}.

Given $\Mat{A}$, we search for an embedding such that the affinity between pairs of SPD matrices is reflected by a measure of similarity on the low-dimensional SPD manifold. In this paper, we propose to make use of either the AIRM or the Stein metric to encode (dis)similarities between SPD matrices. For each pair $(i,j)$ of training samples, this lets us write a cost function of the form
\begin{equation}
	\mathcal{J}_{ij}(\Mat{W}; \Mat{X}_i,\Mat{X}_j) = \Mat{A}_{ij} \delta^2 \left(\Mat{W}^T\Mat{X}_i\Mat{W},\Mat{W}^T\Mat{X}_j\Mat{W}\right)\;,
	\label{eqn:dtl_local_cost}
\end{equation}
where $\delta$ is either $\delta_g$ or $\delta_S$.	These pairwise costs can then be grouped together in a global empirical cost function
\begin{equation}
	\label{eqn:dtl_cost_fun}
	L(\Mat{W}) = \sum_{i,j} \mathcal{J}_{ij}(\Mat{W};\Mat{X}_i,\Mat{X}_j),
\end{equation}
which we seek to minimize w.r.t. $\Mat{W}$.
	
To avoid degeneracies and ensure that the resulting embedding forms a valid SPD manifold, \ie, $\Mat{W}^T\Mat{X}\Mat{W} \succ 0,~ \forall \Mat{X} \in \SPD{n}$, we need $\Mat{W}$ to have full rank. Here, we enforce this requirement by imposing orthonormality constraints on 
$\Mat{W}$, \ie, $\Mat{W}^T\Mat{W} = \mathbf{I}_m$. Note that, with either the AIRM or the Stein divergence, this entails no loss of generality. 
Indeed, any full rank matrix $\tilde{\Mat{W}}$ can be expressed as $\Mat{M}\Mat{W}$, with $\Mat{W}$ an orthonormal matrix and $\Mat{M}\in \mathrm{GL}(n)$. The affine invariance property of the AIRM and of the Stein metric therefore guarantees that
\begin{equation*}
	\mathcal{J}_{ij}(\tilde{\Mat{W}}; \Mat{X}_i,\Mat{X}_j) = \mathcal{J}_{ij}(\Mat{M}\Mat{W}; \Mat{X}_i,\Mat{X}_j)  = \mathcal{J}_{ij}(\Mat{W}; \Mat{X}_i,\Mat{X}_j)\;.
\end{equation*}	

Finally, learning can be expressed as the minimization problem
\begin{align}
	\label{eqn:tl_main_equ}
	\Mat{W}^{\ast} = ~&\underset{\Mat{W} \in  \mathbb{R}^{n \times m}}{\arg\min}
	\sum_{i,j} \Mat{A}_{ij} \delta^2\left( \Mat{W}^T\Mat{X}_i\Mat{W}, \Mat{W}^T\Mat{X}_j\Mat{W} \right) ~~~
	{\rm s.t.}~\Mat{W}^T\Mat{W} = \mathbf{I}_m\;.
\end{align}
In the next section, we describe an effective way of solving~\eqref{eqn:tl_main_equ} via optimization on a (different) Riemannian manifold.
	
\subsection{Optimization on Grassmann Manifolds}

Recent advances in optimization methods formulate problems with orthogonality constraints as optimization problems on
Stiefel or Grassmann manifolds~\cite{Absil_2008}. More specifically, the geometrically correct setting for the  
minimization problem $\min L(\Mat{W})$ with the orthogonality constraint $\Mat{W}^T\Mat{W} = \mathbf{I}_m$
is, in general, on a Stiefel manifold. However, if the cost function $L(\Mat{W})$ possesses the property that for
any rotation matrix $\Mat{R}$ (\ie, $\Mat{R} \in \mathrm{SO}(m), \Mat{R}\Mat{R}^T = \Mat{R}^T\Mat{R} = \mathbf{I}_m$), 
$L(\Mat{W}) = L(\Mat{W}\Mat{R})$, then the problem is on a Grassmann manifold.
	
Since both the AIRM and the Stein metric are affine invariant, we have
\begin{equation*}
	\mathcal{J}(\Mat{X}_i,\Mat{X}_j,\Mat{W}) = \mathcal{J}(\Mat{X}_i,\Mat{X}_j,\Mat{W}\Mat{R}),
\end{equation*}	
\noindent
and thus $L(\Mat{W}) = L(\Mat{W}\Mat{R})$, which therefore identifies~\eqref{eqn:tl_main_equ} as an (unconstrained) optimization problem on the Grassmann manifold $\mathcal{G}(m,n)$.	
	
In particular, here, we utilize a nonlinear Conjugate Gradient (CG) method on Grassmann manifolds to minimize~\eqref{eqn:tl_main_equ}. A  
brief description of the steps of this algorithm is provided in appendix~\ref{sec:conjugate_gradient}. For a more detailed treatment, we refer the reader to~\cite{Absil_2008}. 
As for now, we just confine ourselves to saying that nonlinear CG on Grassmann manifolds requires the $n \times m$
Jacobian matrix of $L(\Mat{W})$ w.r.t. $\Mat{W}$. For the Stein metric, this Jacobian matrix can be obtained by noting that 
\begin{equation}
	D_\Mat{W}  \ln\det\big(\Mat{W}^T\Mat{X}\Mat{W}\big)  = 2\Mat{X}\Mat{W}\big( \Mat{W}^T\Mat{X}\Mat{W}\big)^{-1}\;,
	\label{eqn:logdet_gradient}
\end{equation}	
which lets us identify the Jacobian of the Stein divergence as
\begin{align*}
	D_\Mat{W} \delta_S^2 \big( \Mat{W}^T\Mat{X}_i\Mat{W}, \Mat{W}^T\Mat{X}_j\Mat{W} \big) &= 
	(\Mat{X}_i+\Mat{X}_j)\Mat{W}(\Mat{W}^T\frac{\Mat{X}_i+\Mat{X}_j}{2}\Mat{W})^{-1} \notag \\
	&-\Mat{X}_i\Mat{W}(\Mat{W}^T\Mat{X}_i\Mat{W})^{-1}
	-\Mat{X}_j\Mat{W}(\Mat{W}^T\Mat{X}_j\Mat{W})^{-1}\;.
\end{align*}
For the AIRM, we can exploit the fact that $\tr\left( \log(\Mat{X}) \right) = \ln \det\big ( \Mat{X} \big), \forall\Mat{X} \in \SPD{n}$. We can then derive the Jacobian by utilizing Eq.~\ref{eqn:logdet_gradient}, which yields

\begin{align*}
	&D_\Mat{W}\Big(\delta^2_{g} \left( \Mat{W}^T\Mat{X}_i\Mat{W}, \Mat{W}^T\Mat{X}_j\Mat{W} \right) \Big) =
	D_\Mat{W}\Big( \Big\| \log \Big(\Mat{W}^T\Mat{X}_i\Mat{W}\big(\Mat{W}^T\Mat{X}_j\Mat{W} \big)^{-1} 
	 \Big) \Big\|_F^2\Big) \notag\\
	&= 2D_\Mat{W}\bigg\{ \tr \bigg( \log \Big(  
	\Mat{W}^T\Mat{X}_i\Mat{W} \big(\Mat{W}^T\Mat{X}_j\Mat{W} \big)^{-1} \Big) \bigg) \bigg\} 
	\cdot\log \Big( 
	\Mat{W}^T\Mat{X}_i\Mat{W} \big(\Mat{W}^T\Mat{X}_j\Mat{W} \big)^{-1} \Big) 		\notag\\
	&= 2D_\Mat{W}\bigg( \ln \det\Big(  \Mat{W}^T\Mat{X}_i\Mat{W} \big(\Mat{W}^T\Mat{X}_j\Mat{W} \big)^{-1} \Big) \bigg)
	\log \Big( 
	\Mat{W}^T\Mat{X}_i\Mat{W} \big(\Mat{W}^T\Mat{X}_j\Mat{W} \big)^{-1} \Big) 		\notag\\							
	&=	4\Big(\Mat{X}_i\Mat{W}(\Mat{W}^T\Mat{X}_i\Mat{W})^{-1} -
			\Mat{X}_j\Mat{W}(\Mat{W}^T\Mat{X}_j\Mat{W})^{-1}\Big)
			\log\Big(\Mat{W}^T\Mat{X}_i\Mat{W}\big(\Mat{W}^T\Mat{X}_j\Mat{W}\big)^{-1}\Big)\,.	
\end{align*}
The pseudo-code for our SPD manifold learning (SPD-ML) method is given in Algorithm~\ref{alg:tl_alg}, where $\nabla_WL(\Mat{W})$ denotes the gradient on the manifold obtained from the Jacobian $D_WL(\Mat{W})$, and $\tau(\Mat{H},\Mat{W}_0,\Mat{W}_1)$ denotes the parallel transport of tangent vector $\Mat{H}$ from $\Mat{W}_0$ to $\Mat{W}_1$ (see appendix~\ref{sec:conjugate_gradient} for details).

\begin{algorithm}[!tb]		
	\caption{\small SPD Manifold Learning (SPD-ML).}
	\label{alg:tl_alg}
	\begin{algorithmic}
	\vspace{0.1cm}
	\STATE {\bfseries Input:}\\
	{A set of SPD matrices $\{ \Mat{X}_i \}_{i=1}^p,~\Mat{X}_i~\in~\SPD{n}$\\ 
	The corresponding labels $\{ y_i \}_{i=1}^p,~y_i \in \{1,2,\cdots,C\}$\\
    The dimensionality $m$ of the induced manifold
	}
	\vspace{0.1cm}
	\STATE {\bfseries Output:}\\
	{The mapping $\Mat{W} \in \mathcal{G}(m,n)$
	}
	\vspace{0.25cm}
	\STATE Generate $\Mat{A}$ using \eqref{eqn:Gw}, \eqref{eqn:Gb} and \eqref{eqn:affinity_Gw_Gb_combined}
	\STATE $\Mat{W}_{old} \gets \Mat{I}_{n\times m}$ (\ie, the truncated identity matrix)
	\STATE $\Mat{W} \gets \Mat{W}_{old}$
    \STATE $\Mat{H}_{old} \gets \Mat{0}$
	\REPEAT		
	    \STATE $\Mat{H} \gets -\nabla_{\Mat{W}} L(\Mat{W}) +\eta \tau(\Mat{H}_{old},\Mat{W}_{old},\Mat{W}) $
		\STATE Line search along the geodesic $\gamma(t)$ from $\Mat{W} = \gamma(0)$ in the direction $\Mat{H}$
		to find \mbox{$\Mat{W}^\ast = \underset{\Mat{W}}{\rm{argmin}}~L(\Mat{W})$}			
		\STATE $\Mat{H}_{old} \gets \Mat{H}$
		\STATE $\Mat{W}_{old} \gets \Mat{W}$	
		\STATE $\Mat{W} \gets \Mat{W}^\ast$
	\UNTIL{convergence}
	\end{algorithmic}
\end{algorithm}

\subsection{Designing the Affinity Matrix}
\label{sec:affinity}

Different criteria can be employed to build the affinity matrix $\Mat{A}$. 
In this work, we focus on classification problems on $\SPD{n}$ and therefore exploit class labels to construct $\Mat{A}$. Note, however, that our framework is general and also applies to  
unsupervised or semi-supervised settings. For example, in an unsupervised scenario, $\Mat{A}$ could be built from pairwise 
similarities (distances) on $\SPD{n}$. Solving~\eqref{eqn:tl_main_equ} could then be understood as finding a 	
mapping  where nearby data pairs on the original manifold $\SPD{n}$ remain close in the induced manifold $\SPD{m}$. 

Let us assume that each point $\Mat{X}_i \in \SPD{n}$ belongs to one of $C$ possible classes 
and denote its class label by $y_i$. Our aim is to define an affinity matrix that encodes the notions of 
intra-class and inter-class distances, and thus, when solving~\eqref{eqn:tl_main_equ}, yields a mapping 
that minimizes the intra-class distances while simultaneously maximizing the inter-class distances (\ie, a discriminative mapping).

More specifically, let $\left\{  (\Mat{X}_i, y_i) \right\}_{i=1}^{p}$ be the set of $p$ labeled training points, where $\Mat{X}_i \in \SPD{n}$ and $y_i \in \left\{ 1,2, \cdots, C \right\}$. The affinity of the training data on 
$\SPD{n}$ can be modeled by building a within-class similarity graph $\Mat{G}_w$
and a between-class similarity graph $\Mat{G}_b$.
In particular, we define $\Mat{G}_w$ and $\Mat{G}_b$
as binary matrices constructed from nearest neighbor graphs. This yields
\begin{equation}
   	\Mat{G}_w(i,j) =
   	\left\{
 		\begin{matrix}
       	1, & \mbox{if} \; \Mat{X}_i \in N_w(\Mat{X}_j ) \; \mbox{~or~} \; \Mat{X}_j \in N_w(\Mat{X}_i ) \\
   		0, & \mbox{otherwise}
   		\end{matrix}
   	\right.
\label{eqn:Gw}
\end{equation}%
\noindent
\begin{equation}
   	\Mat{G}_b(i,j) =
   	\left\{
   	\begin{matrix}
       	1, & \mbox{if} \; \Mat{X}_i \in N_b(\Mat{X}_j ) \; \mbox{~or~} \; \Mat{X}_j \in N_b(\Mat{X}_i ) \\
       	0, & \mbox{otherwise}
   	\end{matrix}
   	\right.
   	\label{eqn:Gb}
\end{equation}%
\noindent
where $N_w(\Mat{X}_i)$ is the set of $\nu_w$ nearest neighbors of $\Mat{X}_i$ that share the same label as $y_i$, 
and $N_b(\Mat{X}_i)$ contains the $\nu_b$ nearest neighbors of $\Mat{X}_i$
having different labels. The affinity matrix $\Mat{A}$ is then defined as
\begin{equation}
	\Mat{A} = \Mat{G}_w - \Mat{G}_b \;,
	\label{eqn:affinity_Gw_Gb_combined}
\end{equation}
which resembles the Maximum Margin Criterion (MMC) of~\cite{LI_TNN_2006}. In practice, we set $\nu_w$ to the minimum number of points in each class and, to balance the influence of $\Mat{G}_w$ and $\Mat{G}_b$, choose $\nu_b \leq \nu_w$, with the specific value found by 
cross-validation. We analyze the influence of $\nu_b$ in appendix~\ref{sec:additional_exp}.

\subsection{Discussion in Relation to Region Covariance Descriptors}
\label{sec:further_dis}	

In our experiments, we exploited Region Covariance Matrices (RCMs)~\cite{Tuzel_ECCV_2006} as image descriptors. Here, we discuss some interesting properties of our algorithm when applied to these specific SPD matrices.

There are several reasons why RCMs are attractive to represent images and videos. First, RCMs provide a natural way to fuse various feature types. Second, they help reducing the impact of noisy samples in a region via their inherent averaging operation. Third, RCMs are independent of the size of the region, and can therefore easily be utilized to compare regions of different sizes. Finally, RCMs can be efficiently computed using integral images~\cite{Tuzel_PAMI_2008,Sanin_WACV_2013}.

Let $I$ be a $W \times H$ image, and $\mathbb{O} = \{\Vec{o}_i\}_{i=1}^{r}, \; \Vec{o}_i \in \mathbb{R}^n$ be a set of $r$ observations extracted from $I$, \eg, $\Vec{o}_i$ concatenates intensity values, gradients along the horizontal and vertical directions, filter responses,... for image pixel $i$. Let $\mu = \frac{1}{r} \sum_{i = 1}^{r} \Vec{o}_i$ be the mean value of the observations.
Then image $I$ can be represented by the $n \times n$ RCM 
\begin{align}
	\Mat{C}_{I} = \frac{1}{r-1} \sum_{i = 1}^{r} \left(\Vec{o}_i - \mu \right)\left(\Vec{o}_i - \mu \right)^T 
	= \Mat{O}\Mat{J}\Mat{J}^T\Mat{O}^T\;, 
	\label{eqn:cov_desc}
\end{align}
where $\Mat{J} = {r}^{-3/2}(r\mathbf{I}_{r} - \Mat{1}_{r \times r})$.
To have a valid RCM, $r \geq n$, otherwise $\Mat{C}_{I}$ would have zero eigenvalues, which would make both $\delta_g^2$ 
and $\delta_S^2$ indefinite. 

After learning the projection $\Mat{W}$, the low-dimensional representation of image $I$ is given by $\Mat{W}^T\Mat{O}\Mat{J}\Mat{J}^T\Mat{O}^T\Mat{W}$.
This reveals two interesting properties of our learning scheme. {\bf 1)} The resulting representation can also be thought of as an RCM with $\Mat{W}^T\Mat{O}$ 
as a set of low-dimensional observations. Hence, in our framework, we can create a valid $\SPD{m}$ manifold with only $m$ observations instead of at least $n$ in the original input space. This is not the case for other algorithms, which require having matrices on $\SPD{n}$ as input. In appendix~\ref{sec:additional_exp}, we study the influence of the number of observations on recognition accuracy. {\bf 2)} Applying $\Mat{W}$ directly the set of observations reduces the computation time of creating the final RCM on $\SPD{m}$. This is due to the fact that the computational complexity of computing an RCM is quadratic in the dimensionality of the features.

\section{Empirical Evaluation}
\label{sec:experiments}

In this section, we study the effectiveness of our SPD manifold learning approach. In particular, as mentioned earlier, we focus on classification and present results on two image datasets and one motion capture dataset. In all our experiments, the dimensionality of the low-dimensional SPD manifold was determined by cross-validation. Below, we first briefly describe the different classifiers used in these experiments, and then discuss our results.

\paragraph{Classification algorithms:}
The SPD-ML algorithm introduced in Section~\ref{sec:tensor_learning} allows us to obtain a low-dimensional, more discriminative SPD manifold from a high-dimensional one. Many different classifiers can then be used to categorize the data on this new manifold. In our experiments, we make use of two such classifiers. First, we employ a simple nearest neighbor classifier based on the manifold metric (either AIRM or Stein). This simple classifier clearly evidences the benefits of mapping the original Riemannian structure to a lower-dimensional one. Second, we make use of the Riemannian sparse coding algorithm of~\cite{Harandi_ECCV_2012} (RSR). This algorithm exploits the notion of sparse coding to represent a query SPD matrix using a codebook of SPD matrices. In all our experiments, we formed the codebook purely from the training data, \ie, no dictionary learning was employed. Note that RSR relies on a kernel derived from the Stein metric. We therefore only applied it to the Stein metric-based version of our algorithm. We refer to the different algorithms evaluated in our experiments as:

\begin{itemize}
	\renewcommand{\labelitemi}{\scriptsize$$}
	\item \textbf{NN-Stein:} Stein metric-based Nearest Neighbor classifier.
	\item \textbf{NN-AIRM:} AIRM-based Nearest Neighbor classifier.
	\item \textbf{NN-Stein-ML:} Stein metric-based Nearest Neighbor classifier on the low-dimensional SPD manifold obtained with our approach.
	\item \textbf{NN-AIRM-ML:} AIRM-based Nearest Neighbor classifier on the low-dimensional SPD manifold obtained with our approach.
	\item \textbf{RSR:} Riemannian Sparse Representation~\cite{Harandi_ECCV_2012}.
	\item \textbf{RSR-ML:} Riemannian Sparse Representation on the low-dimensional SPD manifold obtained with our approach.
\end{itemize}

In addition to these methods, we also provide the results of the PLS-based Covariance Discriminant Learning (CDL) technique 
of~\cite{Wang_CVPR_2012_CDL}, as well as of the state-of-the-art baselines of each specific dataset.

\subsection{Material Categorization}
\label{sec:exp_material_cat}

For the task of material categorization, we used the UIUC dataset~\cite{UIUC_Dataset}. 
The UIUC material dataset contains 18 subcategories of materials taken in the wild from 
four general categories (see Fig.~\ref{fig:UIUC_Dataset}): \textit{bark}, \textit{fabric}, \textit{construction materials}, and 
\textit{outer coat of animals}.
Each subcategory has 12 images taken at various scales. 
Following standard practice, half of the images from each subcategory was randomly chosen as training data, and the rest was 
used for testing. We report the average accuracy over 10 different random partitions.

Small RCMs, such as those used for texture recognition in~\cite{Harandi_ECCV_2012}, are hopeless here due to the complexity of the task.
Recently, SIFT features~\cite{SIFT_IJCV_2004} have been shown to be 
robust and discriminative for material classification~\cite{UIUC_Dataset}. Therefore, we constructed RCMs of size $155\times155$
using 128 dimensional SIFT features (from grayscale images) and 27 dimensional color descriptors. 
To this end, we resized all the images to $400 \times 400$ and computed dense SIFT descriptors on a regular grid with 4 pixels spacing. The color descriptors were obtained by simply stacking colors from $3\times3$ patches centered at the grid points. Each grid point therefore yields one 155-dimensional observation $\Vec{o}_i$ in Eq.~\ref{eqn:cov_desc}. The parameters for this experiments were set to $\nu_w = 6$ (minimum number of samples in a class), and $\nu_b = 3$ obtained by 5-fold cross-validation.

Table~\ref{tab:table_UIUC_performance} compares the performance of our different algorithms and of the state-of-the-art method on this dataset (SD)~\cite{UIUC_Dataset}. The results show that appropriate manifold-based methods (\ie, RSR and CDL) with the original $155 \times 155$ RCMs already outperform SD, while NN on the same manifold yields worse performance. However, after applying our learning algorithm, NN not only outperforms SD significantly, but also outperforms both CDL and RSR. RSR on the learned SPD manifold (RSR-ML) further boosts the accuracy to $66.6\%$.

To further evidence the importance of geometry-aware dimensionality reduction, we replaced our low-dimensional RCMs with RCMs obtained by applying PCA directly on the $155$ dimensional features. The AIRM-based NN classifier used on these RCMs gave $42.1\%$ accuracy (best performance over different PCA dimensions). While this is better than the performance in the original feature space (\ie, $35.6\%$), it is significantly lower than the accuracy of our NN-AIRM-ML approach (\ie, $58.3\%$). Finally, note that performing NN-AIRM on the original data required 490s on a 3GHz machine with Matlab. After our dimensionality reduction scheme, this only took 9.7s.

\begin{figure}[!t]
\centering		
	\includegraphics[width = 0.8\textwidth]{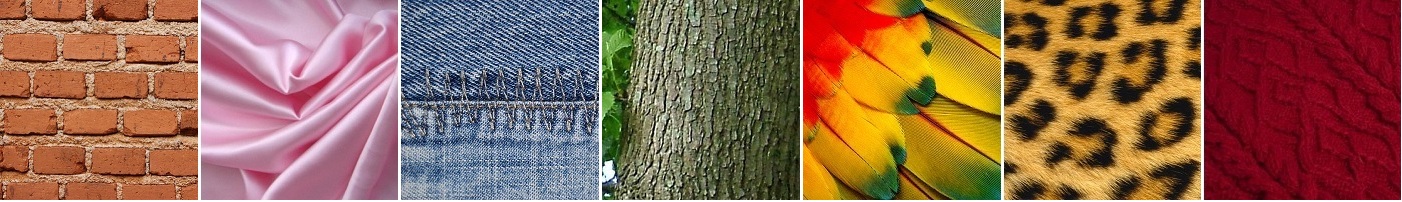}
	\caption{Samples from the UIUC material dataset.}
	\label{fig:UIUC_Dataset}
\end{figure}
\begin{figure}[t]
\begin{minipage}[t]{\columnwidth}
	\centering
	\begin{minipage}[t]{0.45 \linewidth}
		\begin{table}[H] 			
	  	\centering
    	\begin{tabular}{lc}
    		\toprule
	    	{\bf Method} &{\bf Accuracy }\\
    		\toprule  		
    		{\bf SD~\cite{UIUC_Dataset}}                          	 &$43.5\% \pm N/A$\\
	    	{\bf CDL~\cite{Wang_CVPR_2012_CDL}	}          		 	 &$52.3\% \pm 4.3$\\
    		\midrule
    		{\bf NN-Stein}                        	 				 &$35.8\% \pm 2.6$\\
	    	{\bf NN-Stein-ML}                          				 &$58.1\% \pm 2.8$\\
    		\midrule
    		{\bf NN-AIRM}                         	 				 &$35.6\% \pm 2.6$\\
	    	{\bf NN-AIRM-ML}                           				 &$58.3\% \pm 2.3$\\
    		\midrule
    		{\bf RSR~\cite{Harandi_ECCV_2012}	}          		 	 &$52.8\% \pm 2.1$\\    		
	    	{\bf RSR-ML}                         	 			 &$\bf{66.6\% \pm 3.1}$\\
    		\bottomrule	
		    \end{tabular}    	
		    \vspace{0.2cm}	   		
    		\caption   {Mean recognition accuracies with standard deviations for the UIUC material dataset~\cite{UIUC_Dataset}.}	
    		\label{tab:table_UIUC_performance} 
		\end{table}	
	\end{minipage}
	\hspace{4ex}
	\begin{minipage}[t]{0.45 \linewidth}
		\begin{table}[H]		   	    
		 	\centering
			\vspace{0.39cm}
		    \begin{tabular}{lc}
    			\toprule
		    	{\bf Method} &{\bf Accuracy }\\
    			\toprule  		
		    	{\bf CDL~\cite{Wang_CVPR_2012_CDL}}    					 &$79.8\%$\\
    			\midrule
		    	{\bf NN-Stein}                               			 &$61.7\%$\\
    			{\bf NN-Stein-ML}                            			 &$68.6\%$\\
		    	\midrule
    			{\bf NN-AIRM}                              				 &$62.8\%$\\
		    	{\bf NN-AIRM-ML}                           				 &$67.6\%$\\
    			\midrule
		    	{\bf RSR~\cite{Harandi_ECCV_2012}}                 		 &$76.1\%$\\    			
    			{\bf RSR-ML}                         	 			 &$\bf{81.9\%}$\\
				\bottomrule	
		    \end{tabular}
		    \vspace{0.2cm}	   		
		    \caption    { Recognition accuracies for the HDM05-MOCAP dataset~\cite{HDM05_Doc}.}	 	
		    \label{tab:table_HDM05_performance}		   
		 \end{table}
	\end{minipage}	
\end{minipage}
\end{figure}
\def \MOCAP_SCALE  {0.5}
\begin{figure}[!t]
	\centering
	\includegraphics[width = \MOCAP_SCALE \columnwidth]{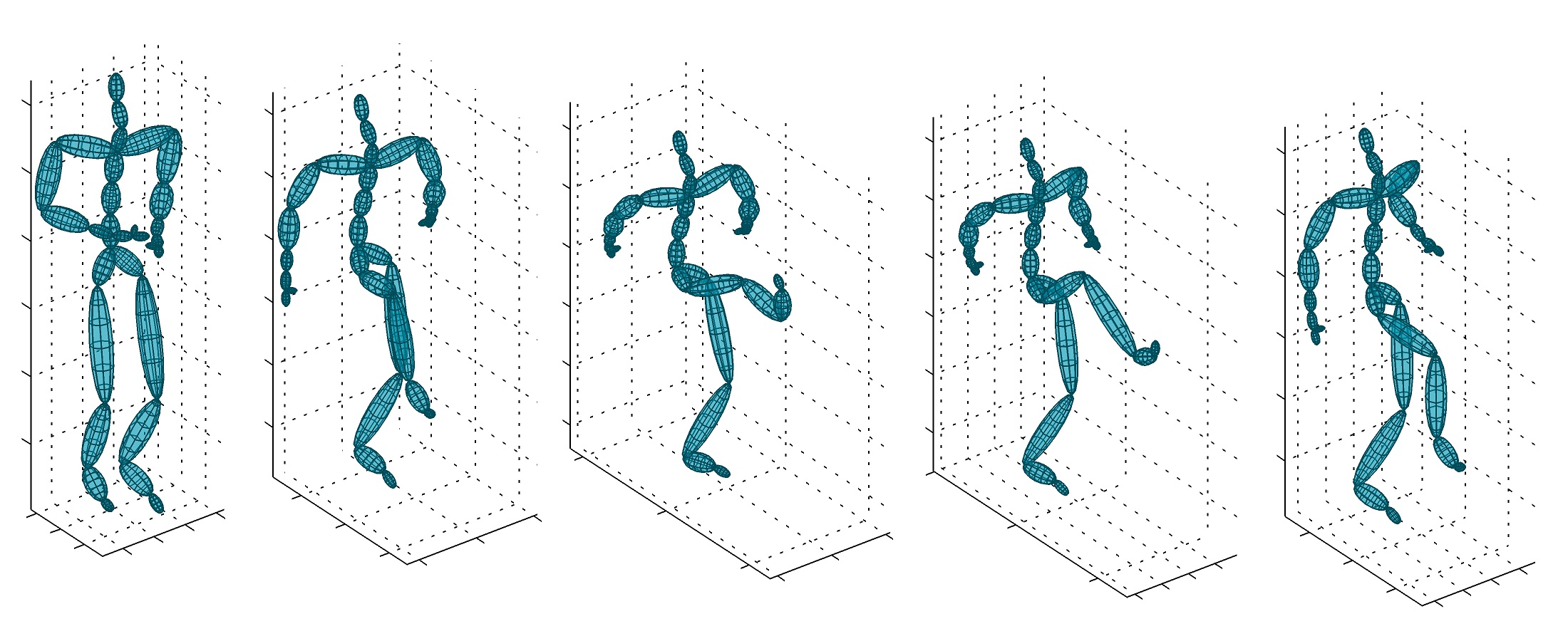}
	\caption{Kicking action from the HDM05 motion capture sequences database~\cite{HDM05_Doc}.}
	\label{fig:MoCap_Dataset}
\end{figure}
\subsection{Action Recognition from Motion Capture Data}
\label{sec:exp_action_rec}
As a second experiment, we tackled the problem of human action recognition from motion capture sequences using the HDM05 database~\cite{HDM05_Doc}. This database contains the following 14 actions: `clap above head', `deposit floor', `elbow to knee', `grab high', `hop both legs', `jog', `kick forward', `lie down floor', `rotate both arms backward', `sit down chair', `sneak', `squat', `stand up lie' and `throw basketball' (see Fig.~\ref{fig:MoCap_Dataset} for an example).
The dataset provides the 3D locations of 31 joints over time acquired at the speed of 120 frames per second.
We describe an action of a $K$ joints skeleton observed over $m$ frames by its joint covariance descriptor~\cite{Husse_IJCAI_2013}, which is an SPD matrix
of size $3K \times 3K$. This matrix is computed as in Eq.~\ref{eqn:cov_desc} by taking $\Vec{o}_i$ as the $93$-dimensional vector concatenating the 3D coordinates of the $31$ joints in frame $i$.

In our experiments, we used 2 subjects for training (\ie, 'bd' and 'mm') and the remaining 3 subjects for testing (\ie, 'bk', 'dg' and 'tr')\footnote{Note that this differs from the setup in~\cite{Husse_IJCAI_2013}, where 3 subjects were used for training and 2 for testing. However, with the setup of~\cite{Husse_IJCAI_2013} where an accuracy of $95.41\%$ was reported, all our algorithms resulted in about $99\%$ accuracy.}. This resulted in 118 training and 188 test sequences for this experiment. The parameters of our method were set to $\nu_w = 5$ (minimum number of samples in one class), and $\nu_b = 5$ by cross-validation.

We report the performance of the different methods on this dataset in Table~\ref{tab:table_HDM05_performance}. 
Again we can see that the accuracies of NN and RSR are significantly improved by our learning algorithm, and that our RSR-ML approach achieves the best accuracy of $81.9\%$.
As on the UIUC dataset, we also evaluated the performance RCMs built by reducing the dimensionality of the features
using PCA. This yielded an accuracy of $63.3\%$ with an AIRM-based NN classifier (best performance over different PCA dimensions). Again, while this slightly outperforms the accuracy of NN-AIRM (\ie, $62.8\%$), it remains clearly inferior to the performance of our NN-AIRM-ML algorithm (\ie, $67.6\%$).

\subsection{Face Recognition}
\label{sec:exp_face_rec}

For face recognition, we used the `b' subset of the FERET dataset~\cite{FERET_Dataset},
which contains 1800 images from 200 subjects. Following common practice~\cite{Harandi_ECCV_2012}, 
we used cropped images, downsampled to $64 \times 64$.
Fig.~\ref{fig:FERET_Dataset} depicts samples from the dataset.

We performed six experiments on this dataset. In all these experiments, the training data was composed of frontal faces with expression and illumination variations (\ie, images marked as `ba', `bj' and `bk'). The six experiments correspond to using six different non-frontal viewing angles as test data (\ie, images marked as `bc',`bd', `be', `bf', `bg' and `bh', respectively).

To represent a face image, we block diagonally concatenated three different $43 \times 43$ RCMs: one obtained from the entire image, one from the left half and one from the right half. This resulted in an RCM of size $129 \times 129$ for each image. Each $43 \times 43$ RCM was computed from the features
\noindent
\begin{equation*}
	\Vec{o}_{x,y} =	\left[~ I(x,y),~ x,~ y,~ |G_{0,0}(x,y)|,~\cdots,~|G_{4,7}(x,y)| ~\right]\;,
\end{equation*}%
\noindent
where {\small $I(x,y)$} is the intensity value at position $(x,y)$, $G_{u,v}{(x,y)}$ is the response of a 2D Gabor wavelet~\cite{Lee_PAMI_1996}
centered at $(x,y)$ with orientation $u$ and scale $v$, and $|\cdot|$ denotes the magnitude of a 
complex value. Here, following~\cite{Harandi_ECCV_2012}, we generated 40 Gabor filters at 8 orientations and 5 scales.

In addition to our algorithms, we evaluated the state-of-the-art Sparse Representation based Classification (SRC)~\cite{Wright_PAMI_2009} and
its Gabor-based extension (GSRC)~\cite{Yang_ECCV_2010_GSRC}. For SRC, we reduced the dimensionality of the data using PCA and chose the dimensionality that gave the best performance. For GSRC, we followed the recommendations of the authors to set the downsampling factor in the Gabor filters, but found that better results could be obtained with a larger $\lambda$ than the recommended one, and thus report these better results obtained with $\lambda = 0.1$. The parameters for our approach were set to $\nu_w = 3$ (minimum number of samples in one class), and $\nu_b = 1$ by cross-validation.

Table~\ref{tab:table_FERET_performance} reports the performance of the different methods. Note that both CDL and RSR outperform the Euclidean face recognition systems SRC and GSRC. Note also that even a simple Stein-based NN on $129 \times 129$ RCMs performs roughly on par with GSRC and better than SRC. More importantly, the representation learned with our SPD-ML algorithm yields significant accuracy gains when used with either NN or RSR for all different viewing angles, with more than $10\%$ improvement for some poses.

\def \FERET_SCALE {0.07}
\begin{figure}[!t]		
\begin{minipage}{\columnwidth}\centering
	\subfigure[ba]{
		\centering
		\includegraphics[width = \FERET_SCALE \linewidth]{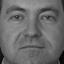}
	}
	\subfigure[bj]{
		\centering
		\includegraphics[width = \FERET_SCALE \linewidth]{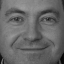}
	}
	\subfigure[bk]{
		\centering
		\includegraphics[width = \FERET_SCALE \linewidth]{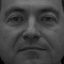}
	}
	\subfigure[bc]{
		\centering
		\includegraphics[width = \FERET_SCALE \linewidth]{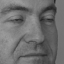}
	}
	\subfigure[bd]{
		\centering
		\includegraphics[width = \FERET_SCALE \linewidth]{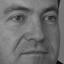}
	}
	\subfigure[be]{
		\centering
		\includegraphics[width = \FERET_SCALE \linewidth]{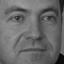}
	}				
	\subfigure[bf]{
		\centering
		\includegraphics[width = \FERET_SCALE \linewidth]{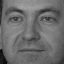}
	}
	\subfigure[bg]{
		\centering
		\includegraphics[width = \FERET_SCALE \linewidth]{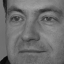}
	}
	\subfigure[bh]{
		\centering
		\includegraphics[width = \FERET_SCALE \linewidth]{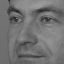}
	}												
\end{minipage}	
	\caption{ \small {Samples from the FERET dataset~\cite{FERET_Dataset}.}}
	\label{fig:FERET_Dataset}
\end{figure}
\begin{table*}[!t]
 	\centering
 	\begin{tabular}{lccccccc}
 		\toprule
 		{\bf Method} &{\bf bc }	&{\bf bd }	&{\bf be }	&{\bf bf }	&{\bf bg }	&{\bf bh }  &{\bf average acc.}\\
 		\toprule  		
 		{\bf SRC~\cite{Wright_PAMI_2009}}      &$9.5\%$	&$37.5\%$	&$77.0\%$	&$88.0\%$	&$48.5\%$	&$11.0\%$	&$45.3\% \pm 3.3$\\  
 		{\bf GSRC~\cite{Yang_ECCV_2010_GSRC}}  &$35.5\%$ &$77.0\%$	&$93.5\%$	&$97.0\%$	&$79.0\%$	&$38.0\%$ &$70.0\% \pm 2.7$\\
 		{\bf CDL~\cite{Wang_CVPR_2012_CDL}}    &$35.0\%$	&$87.5\%$	&$\bf 99.5\%$	&$\bf 100.0\%$	&$91.0\%$	&$34.5\%$ &$74.6\% \pm 3.1$\\
 		\midrule
 		{\bf NN-Stein}         				   &$29.0\%$ &$75.5\%$	&$94.5\%$	&$98.0\%$	&$83.5\%$	&$34.5\%$ &$69.2\% \pm 3.0$\\
 		{\bf NN-Stein-ML}         			   &$40.5\%$ &$88.5\%$	&$97.0\%$	&$99.0\%$	&$91.5\%$	&$44.5\%$ &$76.8\% \pm 2.7$\\
 		\midrule
 		{\bf NN-AIRM}         				   &$28.5\%$ &$72.5\%$	&$93.0\%$	&$97.5\%$	&$83.0\%$	&$35.0\%$ &$68.3\% \pm 3.0$\\
 		{\bf NN-AIRM-ML}         			   &$39.0\%$ &$84.0\%$	&$96.0\%$	&$99.0\%$	&$90.5\%$	&$45.5\%$ &$75.7\% \pm 2.6$\\
 		\midrule
 		{\bf RSR~\cite{Harandi_ECCV_2012}}     &$36.5\%$ &$79.5\%$	&$96.5\%$	&$97.5\%$	&$86.0\%$	&$41.5\%$ &$72.9\% \pm 2.7$\\    			
 		{\bf RSR-ML}					   &$\bf 49.0\%$ &$\bf 90.5\%$ &$98.5\%$ &$\bf 100\%$ &$\bf 93.5\%$ &$\bf 50.5\%$ &$\bf80.3\% \pm 2.4$\\
    	\bottomrule	
 	\end{tabular} 	
	\vspace{0.2cm}
 	\caption    {Recognition accuracies for the FERET face dataset~\cite{FERET_Dataset}.}
 	\label{tab:table_FERET_performance}
\end{table*}

\section{Conclusions and Future Work}
\label{sec:conclusion}
We have introduced a learning algorithm to map a high-dimensional SPD manifold into a lower-dimensional, more discriminative one. To this end, we have exploited a graph embedding formalism with an affinity matrix that encodes intra-class and inter-class distances, and where the similarity between two SPD matrices is defined via either the Stein divergence or the AIRM. Thanks to their invariance to affine transformations, these metrics have allowed us to model the mapping from the high-dimensional manifold to the low-dimensional one with an orthonormal projection. Learning could then be expressed as the solution to an optimization problem on a Grassmann manifold. Our experimental evaluation has demonstrated that the resulting low-dimensional SPD matrices lead to state-of-the art recognition accuracies on several challenging datasets.

In the future, we plan to extend our learning scheme to the unsupervised and semi-supervised scenarios. Finally, we believe that this work is a first step towards showing the importance of preserving the Riemannian structure of the data when performing dimensionality reduction, and thus going from one manifold to another manifold of the same type. We therefore intend to study how this framework can be applied to other types of Riemannian manifolds.

\appendix

\section{Proof of Length Equivalence}
\label{sec:proof_int_metric}

Here, we prove Theorem~1 from Section~3,
\ie, the equivalence between the length of any given curve under the geodesic distance $\delta_g$ and the Stein metric $\delta_S$ up to scale of $2\sqrt{2}$.
The proof of this theorem follows several steps.
We start with the definition of curve length and intrinsic metric.
Without any assumption on differentiability, let $(\mathcal{M},d)$ be a metric space.  
A curve in $\mathcal{M}$ is a continuous function $\gamma : [0, 1] \rightarrow \mathcal{M}$ and joins the starting point $\gamma(0) = x$
to the end point $\gamma(1) = y$. 
\begin{definition} \label{def:curve_length}
	The length of a curve $\gamma$ is the supremum of $l(\gamma ; \{t_i \})$ over all possible partitions \mbox{$\{t_i \}$},
	where \mbox{$0 = t_0 < t_1 < \cdots < t_{n-1} < t_n = 1$} and
	\mbox{{$l(\gamma ; \{t_i \}) = \sum_{i}d\left(\gamma(t_i),\gamma(t_{i-1})\right)$}}.
\end{definition}

\begin{definition} \label{def:intrinsic_metric_thm}
	The intrinsic metric $\widehat{\delta}(x,y)$ on $\mathcal{M}$
	is defined as the infimum of the lengths of all paths from $x$ to $y$.
\end{definition}
	
\begin{theorem}[~\cite{Hartley_IJCV_13}]
	If the intrinsic metrics induced by two metrics $d_1$ and $d_2$ are identical up to a scale $\xi$,
	then the length of any given curve is the same under both metrics up to $\xi$.
\end{theorem}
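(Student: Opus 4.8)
The plan is to reduce the statement to a single structural fact---that the length of a curve depends only on the intrinsic metric a given metric induces, and not on the metric itself---and then to exploit the trivial linearity of length in the metric. Throughout, write $L_d(\gamma)=\sup_{\{t_i\}}\sum_i d\big(\gamma(t_i),\gamma(t_{i-1})\big)$ for the length of $\gamma$ measured with a metric $d$ (Definition~\ref{def:curve_length}), and let $\widehat{\delta}_d$ denote the intrinsic metric it induces (Definition~\ref{def:intrinsic_metric_thm}). The hypothesis is then $\widehat{\delta}_{d_2}=\xi\,\widehat{\delta}_{d_1}$, and the goal is $L_{d_2}(\gamma)=\xi\,L_{d_1}(\gamma)$ for every curve $\gamma$.

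First I would record the elementary pointwise bound $\widehat{\delta}_d\ge d$: for any path from $x$ to $y$ the triangle inequality gives $\sum_i d(\gamma(t_i),\gamma(t_{i-1}))\ge d(x,y)$, so every competitor in the infimum defining $\widehat{\delta}_d(x,y)$ is at least $d(x,y)$. The heart of the argument is then the lemma that length is \emph{intrinsic}, i.e.\ $L_{\widehat{\delta}_d}(\gamma)=L_d(\gamma)$ for all $\gamma$. One direction, $L_{\widehat{\delta}_d}\ge L_d$, is immediate: apply $\widehat{\delta}_d\ge d$ term by term in each partition sum and take suprema. For the reverse direction I would fix a partition $\{t_i\}$, bound each $\widehat{\delta}_d(\gamma(t_{i-1}),\gamma(t_i))$ by the length $L_d(\gamma|_{[t_{i-1},t_i]})$ of the corresponding sub-curve (an admissible competitor in the infimum), and then invoke additivity of length under subdivision, $\sum_i L_d(\gamma|_{[t_{i-1},t_i]})=L_d(\gamma)$, to obtain $\sum_i \widehat{\delta}_d(\gamma(t_i),\gamma(t_{i-1}))\le L_d(\gamma)$ for every partition, whence $L_{\widehat{\delta}_d}(\gamma)\le L_d(\gamma)$.

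With this lemma in hand the theorem is a short computation. Length scales linearly in the metric, $L_{\xi d}=\xi\,L_d$, since $\xi>0$ factors out of every partition sum and hence out of the supremum. Applying the lemma to $d_1$ and to $d_2$, and then inserting the hypothesis, yields
\[
L_{d_2}(\gamma)=L_{\widehat{\delta}_{d_2}}(\gamma)=L_{\xi\widehat{\delta}_{d_1}}(\gamma)=\xi\,L_{\widehat{\delta}_{d_1}}(\gamma)=\xi\,L_{d_1}(\gamma),
\]
which is exactly the claimed equality.

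The main obstacle is the reverse inequality in the intrinsic-length lemma, and in particular the additivity of $L_d$ under subdivision of the parameter interval; this in turn rests on the fact that refining a partition never decreases its sum, a direct consequence of the triangle inequality. One must also check that the restriction $\gamma|_{[t_{i-1},t_i]}$ genuinely qualifies as a path in the infimum defining $\widehat{\delta}_d$, so that the per-segment bound is legitimate. Everything else---the easy direction of the lemma and the scaling step---is routine. It is worth stressing that the argument is purely metric-space-theoretic and never touches the explicit forms of $\delta_g$ and $\delta_S$; it is precisely the generic device that, once combined with a separate computation showing that the intrinsic metrics of $\delta_S$ and $\delta_g$ agree up to the factor $2\sqrt{2}$, delivers Theorem~\ref{thm:curve_equiv_stein_thm}.
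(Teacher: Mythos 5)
Your argument is correct, but note that the paper itself offers no proof of this statement to compare against: the theorem is imported verbatim from \cite{Hartley_IJCV_13} and used as a black box, together with its companion result on the limit of metric ratios, to reduce Theorem~\ref{thm:curve_equiv_stein_thm} to the computation $\lim_{\Mat{X}\rightarrow\Mat{Y}} \delta_g^2/\delta_S^2 = 8$. Your proof is the standard metric-geometry argument behind the cited result: the pointwise bound $\widehat{\delta}_d \ge d$, the key lemma that length is an intrinsic quantity ($L_{\widehat{\delta}_d}(\gamma) = L_d(\gamma)$, obtained via per-segment competitors and additivity of length under subdivision, which in turn rests on refinement monotonicity from the triangle inequality), and the trivial linearity $L_{\xi d} = \xi L_d$; the chain $L_{d_2} = L_{\widehat{\delta}_{d_2}} = L_{\xi\widehat{\delta}_{d_1}} = \xi L_{\widehat{\delta}_{d_1}} = \xi L_{d_1}$ then delivers the claim. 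All steps are sound. Two small points deserve to be made explicit in a careful write-up: \textbf{(i)} $\widehat{\delta}_d$ may take the value $+\infty$ (no rectifiable path need join two points), so the lemma should be read in $[0,\infty]$ --- for a non-rectifiable $\gamma$ both sides are $+\infty$ by your easy inequality, while for rectifiable $\gamma$ each sub-curve is a finite-length competitor so the reverse bound applies, and the scaling step uses the convention $\xi\cdot\infty=\infty$; \textbf{(ii)} the per-segment bound $\widehat{\delta}_d\big(\gamma(t_{i-1}),\gamma(t_i)\big) \le L_d\big(\gamma|_{[t_{i-1},t_i]}\big)$ requires affinely reparametrizing the restriction to $[0,1]$ to match the definition of a path, under which the supremum-over-partitions length is unchanged, so restrictions are legitimately admissible in the infimum. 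With those caveats flagged, your proof is complete and self-contained, which is arguably a service the paper does not provide, since it delegates this step entirely to the citation.
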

\begin{theorem}[~\cite{Hartley_IJCV_13}]
	If $d_1(x,y)$ and $d_2(x,y)$ are two metrics defined on a space $\mathcal{M}$ such that
	\begin{equation}
		\lim_{d_1(x,y) \rightarrow 0} \:\frac{d_2(x,y)}{d_1(x,y)} = 1.
		\label{eqn:intrinsic_metric0}
	\end{equation}
	uniformly (with respect to $x$ and $y$), then their intrinsic metrics are identical.
\end{theorem}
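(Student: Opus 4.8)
The plan is to prove the stronger assertion that the length of \emph{every} curve is identical under $d_1$ and $d_2$; equality of the two intrinsic metrics then follows at once by taking the infimum of lengths over all paths joining $x$ to $y$ (Definition~\ref{def:intrinsic_metric_thm}). So I would fix a curve $\gamma:[0,1]\to\mathcal{M}$ and, for a partition $P=\{t_i\}$, write $S_k(P)=\sum_i d_k(\gamma(t_i),\gamma(t_{i-1}))$ for the associated approximating sum, so that the $d_k$-length is $L_k(\gamma)=\sup_P S_k(P)$ by Definition~\ref{def:curve_length}. The goal becomes $L_1(\gamma)=L_2(\gamma)$.

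First I would turn the hypothesis~\eqref{eqn:intrinsic_metric0} into a quantitative statement: given $\epsilon>0$, the \emph{uniform} limit supplies a $\delta>0$ such that $d_1(x,y)<\delta$ implies $(1-\epsilon)\,d_1(x,y)\le d_2(x,y)\le(1+\epsilon)\,d_1(x,y)$ for all $x,y$. The difficulty I expect here is that this two-sided comparison is only valid for pairs that are already $d_1$-close, whereas $L_k(\gamma)$ is a supremum over \emph{all} partitions, including coarse ones whose consecutive sample points can be far apart. The observation that removes this obstacle is that refining a partition never decreases $S_k$: by the triangle inequality for $d_k$, inserting a node $t'$ between $t_{i-1}$ and $t_i$ replaces $d_k(\gamma(t_i),\gamma(t_{i-1}))$ by $d_k(\gamma(t_i),\gamma(t'))+d_k(\gamma(t'),\gamma(t_{i-1}))\ge d_k(\gamma(t_i),\gamma(t_{i-1}))$. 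Hence, for any mesh bound $\eta>0$, the value $\sup_P S_k(P)$ is unchanged if we restrict to partitions of mesh $<\eta$, since every partition refines to one of mesh $<\eta$ without decreasing its sum.

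Next I would invoke continuity of $\gamma$. As $[0,1]$ is compact and $\gamma$ is continuous, it is uniformly continuous, so there is $\eta>0$ with $|s-t|<\eta\Rightarrow d_1(\gamma(s),\gamma(t))<\delta$. For any partition $P$ of mesh $<\eta$ every consecutive pair is then $d_1$-close, so the quantitative bound applies term by term and summing gives $(1-\epsilon)\,S_1(P)\le S_2(P)\le(1+\epsilon)\,S_1(P)$. Taking the supremum over all such $P$ and using the mesh-reduction of the previous step yields $(1-\epsilon)\,L_1(\gamma)\le L_2(\gamma)\le(1+\epsilon)\,L_1(\gamma)$, an estimate that also covers the degenerate case $L_1(\gamma)=\infty$, which it forces $L_2(\gamma)=\infty$. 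Letting $\epsilon\to 0$ then gives $L_1(\gamma)=L_2(\gamma)$.

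Finally I would assemble the conclusion: since every curve carries the same length under both metrics, the infimum of lengths over paths from $x$ to $y$ is the same, so $\widehat{\delta}_1(x,y)=\widehat{\delta}_2(x,y)$. The main obstacle is exactly the one isolated above, namely reconciling the purely local (small-$d_1$) nature of the ratio hypothesis with the global supremum defining curve length; the refinement/triangle-inequality argument combined with uniform continuity is precisely what bridges that gap. A minor technical point to check is that $d_1$ and $d_2$ determine the same admissible (continuous, rectifiable) paths, so that the two infima range over the same family; this follows from the comparability of $d_2$ to $d_1$ for nearby points, which guarantees the two metrics induce the same local notion of convergence.
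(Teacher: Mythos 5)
Your argument is sound, but be aware that the paper itself contains no proof of this statement: it is imported verbatim (together with its companion result that identical intrinsic metrics up to a scale $\xi$ imply equal curve lengths up to $\xi$) from the rotation-averaging paper \cite{Hartley_IJCV_13}, and is used in the appendix only as a black box in the chain ``ratio limit $\Rightarrow$ identical intrinsic metrics $\Rightarrow$ equal curve lengths''. Your proof is the standard one for this kind of statement: convert the uniform limit into the two-sided bound $(1-\epsilon)\,d_1 \le d_2 \le (1+\epsilon)\,d_1$ for $d_1$-close pairs, use the triangle inequality to show that refining a partition never decreases the approximating sums (so both lengths are already computed by partitions of arbitrarily small mesh), invoke uniform continuity of $\gamma$ on the compact interval $[0,1]$ to make every consecutive pair $d_1$-close, and pass to the supremum and then $\epsilon \to 0$. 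In fact your route is more economical than the paper's: you prove directly the stronger assertion that every curve has the same length under both metrics, which is precisely what the paper's Theorem~1 needs, so the two theorems quoted from \cite{Hartley_IJCV_13} collapse into your single argument; moreover it scales verbatim to a ratio limit $\xi$ (giving lengths equal up to $\xi$, the form needed for the constant $2\sqrt{2}$), whereas the paper must compose two separate cited results to reach that conclusion.

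One caveat concerns your closing ``minor technical point''. The hypothesis does \emph{not} imply that $d_1$ and $d_2$ induce the same topology: the bound $(1-\epsilon)\,d_1 \le d_2 \le (1+\epsilon)\,d_1$ is conditioned on $d_1(x,y) < \delta$, so it yields only that $d_1$-convergence implies $d_2$-convergence, and says nothing about pairs with $d_1(x,y) \ge \delta$ but small $d_2(x,y)$. One can construct metrics satisfying the stated limit for which a $d_2$-continuous curve fails to be $d_1$-continuous, so that the two infima range over genuinely different path families and the literal statement can fail; the intended reading (implicit in \cite{Hartley_IJCV_13}, and automatic in this paper's application, where $\delta_g$ and $\delta_S$ both induce the standard topology of $\mathcal{S}_{++}^{n}$) carries a same-topology assumption. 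You should therefore state that equivalence of admissible paths as a standing hypothesis rather than claim it follows from the comparability of the metrics for nearby points, which gives only one of the two needed directions.
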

Therefore, here, we need to study the behavior of 
\begin{equation*}
	\lim_{\delta_S^2(\Mat{X},\Mat{Y}) \rightarrow 0} \:\frac{\delta_g^2(\Mat{X},\Mat{Y})}{\delta_S^2(\Mat{X},\Mat{Y})}\;
\end{equation*}
to prove our theorem on curve length equivalence.

\begin{proof}
	Let us first note that for an affine invariant metric $\delta$ on $\mathcal{S}_{++}^d$,
	\begin{equation*}
		\delta^2(\Mat{X},\Mat{Y}) = \delta^2(\Mat{I}_d,\Mat{D}^{-1/2}\Mat{L}^T\Mat{Y}\Mat{L}\Mat{D}^{-1/2}) 
		\triangleq \delta^2(\Mat{I}_d,\Mat{M})\;,
	\end{equation*} 
	where $\Mat{X} = \Mat{L}\Mat{D}\Mat{L}^T$ and $\Mat{L}\Mat{L}^T = \Mat{I}_d$. 
	Similarly, we can decompose $\Mat{M}$ as $\Mat{M} = \tilde{\Mat{L}}\tilde{\Mat{D}}\tilde{\Mat{L}}^T$, with 
	$\tilde{\Mat{L}}\tilde{\Mat{L}}^T = \tilde{\Mat{L}}^T\tilde{\Mat{L}} = \Mat{I}_d$, which yields
	\begin{equation*}
		\delta^2(\Mat{X},\Mat{Y}) = \delta^2(\Mat{I}_d,\tilde{\Mat{D}})\;.
	\end{equation*} 
Since all our matrices are positive definite, $\tilde{\Mat{D}}$ is a diagonal matrix with strictly positive values on its diagonal, 
and can be written as
\begin{equation*}
	\tilde{\Mat{D}} \triangleq \DIAG(\exp (t \Vec{\nu}))\;,
\end{equation*}
with $\Vec{\nu} \in \mathbb{R}^d$ and $t \in \mathbb{R}$. This definition can also be motivated by noting that the tangent vectors at $\mathbf{I}_d$ 
are symmetric matrices of the form $\tilde{\Mat{L}}\DIAG(t \Vec{\nu})\tilde{\Mat{L}}^T$. Applying the exponential map yields points on the manifold of the form $\tilde{\Mat{L}}\DIAG(\exp(t \Vec{\nu}))\tilde{\Mat{L}}^T$. 
As mentioned before, with an affine invariant metric, the dependency on $\tilde{\Mat{L}}$ and $\tilde{\Mat{L}}^T$ can be dropped.
		 
The previous discussion implies that we just need to study the behavior of the Stein metric around $\Mat{I}_d$ using a diagonal matrix
to draw any conclusion.	 We note that $\tilde{\Mat{D}} \rightarrow \mathbf{I}_d$ iff $t \rightarrow 0$. Therefore, given the definitions 
of $\delta_g$ and $\delta_S$ from Section~3 of the paper, we have		
\begin{align}
	&\lim_{\Mat{X} \rightarrow \Mat{Y}} \:\frac{\delta_g^2 (\Mat{X},\Mat{Y})}{\delta_S^2 (\Mat{X},\Mat{Y})} = 
	\lim_{t \rightarrow 0} \:\frac{\delta_g^2 \Big(\Mat{I}_d,\DIAG \big(\exp (t\Vec{\nu})\big)\Big)}
	{\delta_S^2 \Big(\Mat{I}_d,\DIAG \big(\exp (t\Vec{\nu})\big)\Big)}
	\notag \\
	&= \lim_{t \rightarrow 0} \:\frac
	{\Big\| \log \Big( \DIAG\big( \exp(t \Vec{\nu})\big) \Big)\Big\|_F^2}
	{\ln \Big|\frac{1}{2}\DIAG\big( 1 + \exp(t \Vec{\nu})\big)\Big|
	 -\frac{1}{2}\ln \Big|\DIAG\big( \exp(t \Vec{\nu})\big)  \Big|  }			
	\notag \\
	&= \lim_{t \rightarrow 0} \:\frac{t^2\sum_{i = 1}^d\limits \nu_i^2}
	{\sum_{i = 1}^d\limits \ln\Big(1 + \exp(t \nu_i)\Big) -t\sum_{i = 1}^d\limits \frac{\nu_i}{2} -d\ln(2)}			
	\label{eqn:proof_stein_airm_1} \\
	&= \lim_{t \rightarrow 0} \:\frac{2\sum_{i = 1}^d\limits \nu_i^2}
	{\sum_{i = 1}^d\limits{\frac{\nu_i^2\exp(t \nu_i)}{\big(1+\exp(t\nu_i)\big)^2}}}
	\label{eqn:proof_stein_airm_2} 			
	= 8\;,
	\end{align}
	where L'H\^{o}pital's rule was used twice from \eqref{eqn:proof_stein_airm_1} to \eqref{eqn:proof_stein_airm_2} since 
	the limit in \eqref{eqn:proof_stein_airm_1} is indefinite. 	
	Therefore,  
	\begin{equation*}
	\lim_{\Mat{X} \rightarrow \Mat{Y}} \:\frac{\delta_g (\Mat{X},\Mat{Y})}{\delta_S (\Mat{X},\Mat{Y})} =2\sqrt{2},
	\end{equation*}	
	which concludes the proof.
\end{proof}

\section{Conjugate Gradient on Grassmann Manifolds}
\label{sec:conjugate_gradient}

In our formulation, we model the projection $\Mat{W}$ as a point  
on a Grassmann manifold $\mathcal{G}(m,n)$. The Grassmann manifold $\mathcal{G}(m,n)$ consists of the set of all linear 
$m$-dimensional subspaces of $\mathbb{R}^n$. In particular, this lets us handle constraints of the form 
$\Mat{W}^T\Mat{W} = \Mat{I}_m$. Learning the projection then boils down to solving a non-linear optimization problem on the Grassmann manifold. Here, we employ a conjugate gradient (CG) method on the manifold,
which requires some notions of differential geometry reviewed below.

In differential geometry, the shortest path between two points on a manifold is a curve called a {\it geodesic}. 
The {\it tangent space} at a point on a manifold is a vector space that consists of the tangent vectors of all possible 
curves passing through this point.
Unlike flat spaces, on a manifold one cannot transport a tangent vector $\Delta$ from one 
point to another point by simple translation. To get a better intuition, take the case where the manifold is a sphere, and consider
two tangent spaces, one located at the pole and one at a point on the equator. Obviously the tangent vectors at the pole 
do not belong to the tangent space at the equator. Therefore, simple vector translation is not sufficient. 
As illustrated in Fig.~\ref{fig:parallel_transport}, transporting 
$\Delta$ from $\Mat{W}$ to $\Mat{V}$ on the manifold $\mathcal{M}$ requires subtracting the normal component 
$\Delta_{\perp}$ at $\Mat{V}$ for the resulting vector to be a tangent vector. Such a transfer of tangent vector 
is called {\it parallel transport}. Parallel transport is required by the CG method to compute the new descent direction  
by combining the gradient direction at the current and previous solutions.

\begin{figure}[!tb]
	\centering
	\includegraphics[width = 0.6\columnwidth]{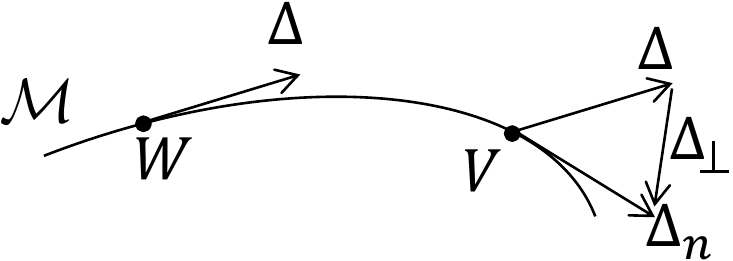}
	\caption{Parallel transport of a tangent vector $\Delta$ from a point $\Mat{W}$ to another point $\Mat{V}$ on the manifold.}
	\label{fig:parallel_transport}	
\end{figure}

On a Grassmann manifold, the above-mentioned operations have efficient numerical forms and can thus be used to 
perform optimization on the manifold. CG on a Grassmann manifold can be summarized by the following steps:
\begin{itemize}
	\item[\bf (i)] Compute the gradient $\nabla_\Mat{W} L$ of the objective function $L(\Mat{W})$ on the manifold at 
	the current solution using
	\begin{equation}
		\nabla_\Mat{W}L = D_\Mat{W}L - \Mat{W} \Mat{W}^T D_\Mat{W}L\;.
		\label{eqn:gradient_manif}
	\end{equation}
		
	\item[\bf (ii)] Determine the search direction $\Mat{H}$ by parallel transporting the previous search 
	direction and combining it with $\nabla_\Mat{W}L$.
		
	\item[\bf (iii)] Perform a line search along the geodesic at $\Mat{W}$ in the direction $\Mat{H}$.
	On the Grassmann manifold, the geodesics going
	from point $\Mat{X}$ in direction $\Delta$ can be represented
	by the Geodesic Equation~\cite{Absil_2008}
\begin{equation}
	\Mat{X}(t) =
	\begin{bmatrix}
		\Mat{X}\Mat{V} &\Mat{U}
	\end{bmatrix}
	\begin{bmatrix}
		\cos (\Sigma t) \\
		\sin (\Sigma t)
	\end{bmatrix}
	\Mat{V}^T
	\label{eqn:geodesic_curve}
\end{equation}
where $t$ is the parameter indicating the location along the geodesic, and $\Mat{U} \Sigma \Mat{V}^T$
is the compact singular value decomposition of $\Delta$. 
\end{itemize}
These steps are repeated until convergence to a local minimum, or until a maximum number of iterations is reached.

\section{Additional Experiments}
\label{sec:additional_exp}

\subsection{Parameter Sensitivity}
In all our experiments, the parameters of our approach were set in a principled manner (\ie, $\nu_w$ as the minimum number of samples in one class, and $\nu_b$ by cross-validation). 
In this section, we nonetheless study the influence of the number of nearest neighbor from different classes ($\nu_b$) on the overall performance.
To this end, we employed the UIUC material dataset and report the accuracy of our NN-Stein-ML method when varying this parameter and fixing the other to the value reported in Section~5 ($\nu_w = 6$).
Fig.~\ref{fig:nu_b_analysis} depicts the recognition accuracy for values of $\nu_b$ in the interval $[1,12]$. Note that for 
$\nu_b = 1$, which is equivalent to mainly considering the intra-class discrimination, the performance drops.  
For $\nu_b = 12$, which makes the inter-class discrimination dominant, the performance drops even further. 
The maximum performance of $58.6\%$ is reached for $\nu_b = 4$, which again shows that balance between the intra-class and inter-class terms is important. Note that our cross-validation procedure led to $\nu_b = 3$, which is not the optimal value on the test data, but still yields good accuracy.

\def \SENS_SIZE {0.85}
\begin{figure}[!t]		
   	\centering
   	\includegraphics[width= \SENS_SIZE \textwidth]{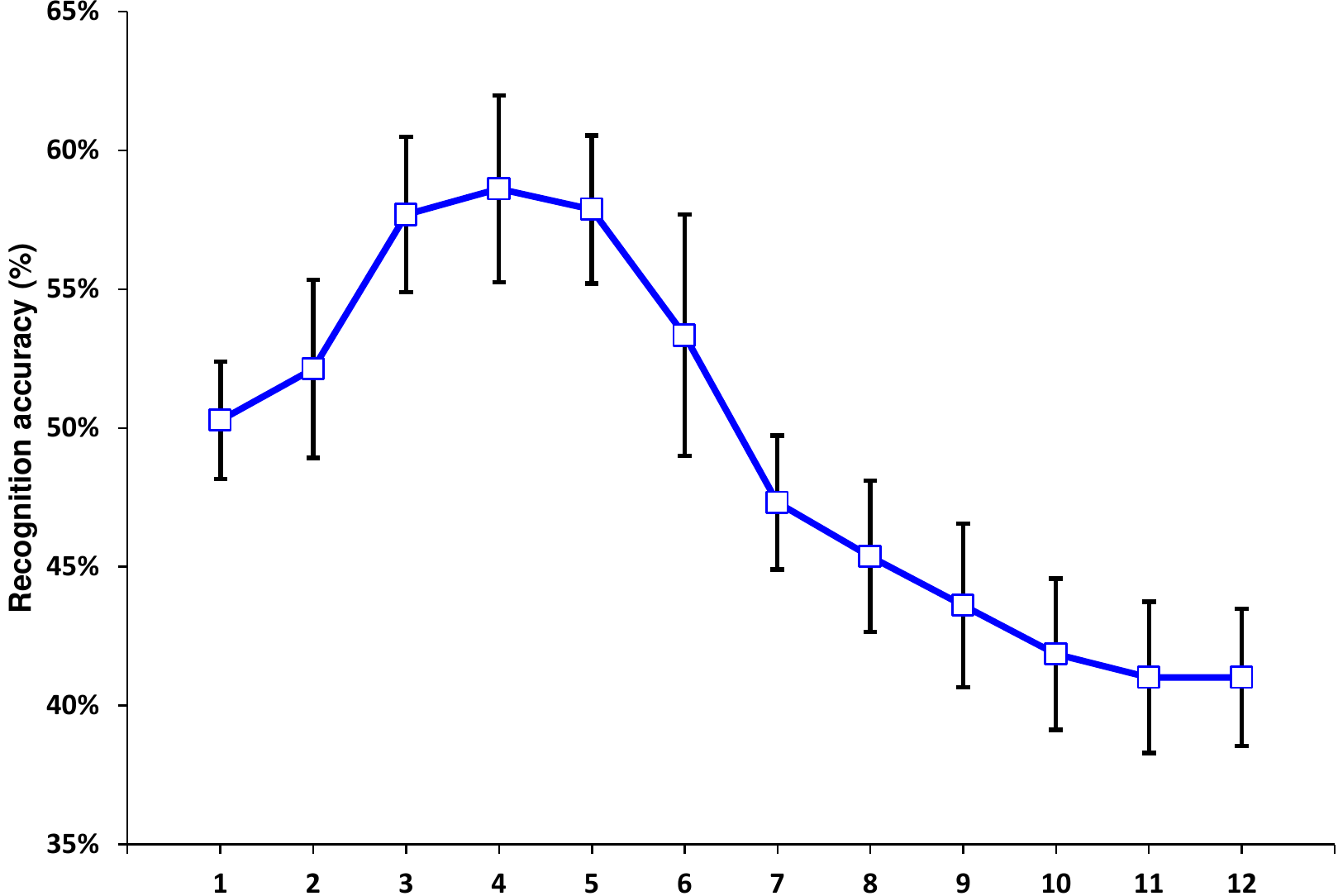}
	\caption{Accuracy on the UIUC material dataset for varying values of $\nu_b$.}
	\label{fig:nu_b_analysis}
\end{figure}

\subsection{Influence of the Number of Observations}
\label{sec:sensitivity}
Finally, as discussed in Section~\textcolor{red}{4.3}, we studied the sensitivity of our learning method to the number of observations used to build the RCMs.
To this end, we employed the UIUC material dataset. For the training images, where computational cost is unimportant, we generated RCMs using all possible observations (our setup provided
us with 9600 observations per image). For the test RCMs, we reduced the number of observations on an octave basis, \ie, downsampled the number of observations by a factor of two repetitively. 
\def \SEN_SIZE {0.85}
\begin{figure}[!t]	
	\centering
	\includegraphics[width = \SEN_SIZE \linewidth]{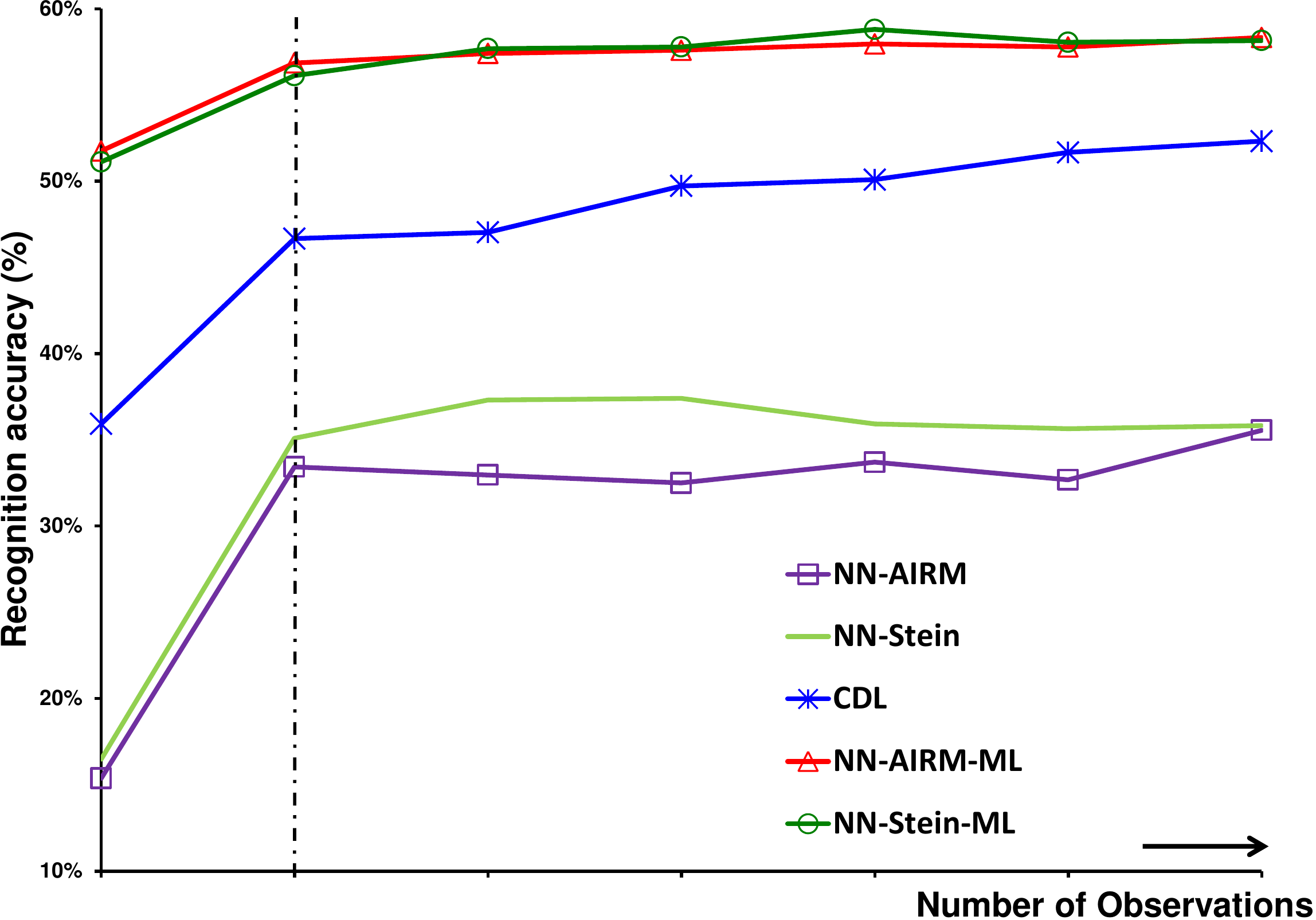}	
	\caption{\small Sensitivity of different algorithms to the number of observations used to create RCMs.}
	\label{fig:variability_analysis}
\end{figure}
Fig.~\ref{fig:variability_analysis} depicts the performance of CDL, as well as of NN classifiers with both the Stein metric and the AIRM, with and without our learning scheme. The point where the number of observations $r$ matches the size of the RCM $n$ (\ie, minimum number of observations to have a valid SPD matrix) is marked by a vertical dashed line. On the left side of this line, the number of observations is less than $n$. Therefore, for CDL, NN-Stein and NN-AIRM, a small regularizer of the form $\epsilon \mathbf{I}_n$ has to be added to the RCMs to make them positive definite. Note that no such regularizer was necessary when using our approach.
From Fig.~\ref{fig:variability_analysis}, we can see that all algorithms have a stable performance when the number of observations is large enough. When reducing the number of observations below $n$, the performance of CDL, NN-Stein and NN-AIRM drops down by 17\%, 19\% and 20\%, respectively. In contrast, with our learning algorithm, the drop in performance is less than 7\%. 

\end{document}